\newtheorem{mydef}{Theorem}
\begin{document}

\title{Supervised Hashing based on Energy Minimization}
\author{Zihao Hu}
\author{Xiyi Luo}
\author{Hongtao Lu\thanks{Corresponding author.}}
\author{Yong Yu}
\affil{Department of Computer Science and Engineering, Shanghai Jiao Tong University\authorcr{\{zihaohu, moberq.luo, htlu\}@sjtu.edu.cn, yyu@apex.sjtu.edu.cn}}
\maketitle

\begin{abstract}
   Recently, supervised hashing methods have attracted much attention since they can optimize retrieval speed and storage cost while preserving semantic information. Because hashing codes learning is NP-hard, many methods resort to some form of relaxation technique. But the performance of these methods can easily deteriorate due to the relaxation. Luckily, many supervised hashing formulations can be viewed as energy functions, hence solving hashing codes is equivalent to learning marginals in the corresponding conditional random field (CRF). By minimizing the KL divergence between a fully factorized distribution and the Gibbs distribution of this CRF, a set of consistency equations can be obtained, but updating them in parallel may not yield a local optimum since the variational lower bound is not guaranteed to increase. In this paper, we use a linear approximation of the sigmoid function to convert these consistency equations to linear systems, which have a closed-form solution. By applying this novel technique to two classical hashing formulations KSH and SPLH, we obtain two new methods called EM (energy minimizing based)-KSH and EM-SPLH.
   Experimental results on three datasets show the superiority of our methods.
\end{abstract}

\section{Introduction}

Nearest neighbor search (NNS) is a well-known problem arising in numerous fields of application for finding points closest to a given query. Exact nearest neighbor search is intractable in high-dimensional spaces and unnecessary in many cases. Hence, hashing has been merited for it can refine retrieval speed and storage cost considerably ~\cite{weiss2009spectral,wang2010semi,norouzi2011minimal,liu2012supervised,kong2012isotropic,wang2013learning,lin2013general,yu2014circulant,lin2014fast,shen2015supervised,kang2016column}.

Studies on hashing are roughly in two streams, categorized by whether the learned hash functions rely on the training data. The early exploration of hashing focuses on using random projections to construct hash functions, thus is data-independent. The most popular data-independent method is Locality Sensitive Hashing (LSH)~\cite{gionis1999similarity}, which is widely used until now.

Data-dependent hashing methods have attracted considerable attention in recent years, since they leverage the training data to achieve better performance. Data-dependent hashing can be divided into two types, i.e., unsupervised and supervised methods. Unsupervised techniques learn underlying linear or non-linear local structures of the training data. Representative methods in this fashion include iterative quantization (ITQ) ~\cite{gong2011iterative} and locally linear hashing (LLH) ~\cite{irie2014locally}. However, in many real-world scenarios, preserving semantic information is more important. Hence, supervised hashing methods are proposed to leverage semantic tags of data points. Representatives of supervised hashing methods include CCA-ITQ ~\cite{gong2011iterative}, kernelized supervised hashing (KSH) ~\cite{liu2012supervised}, two step hashing (TSH) ~\cite{lin2013general}, latent factor hashing (LFH) ~\cite{zhang2014supervised}, supervised discrete hashing (SDH) ~\cite{shen2015supervised} and COSDISH ~\cite{kang2016column}.


Although many efforts have been made on designing new formulations and optimization procedures for supervised hashing, there are few works that adopt probabilistic inference techniques. LFH ~\cite{zhang2014supervised} might be the first method to use a generative model for supervised hashing, which assumes that the pairwise similarity is generated by the inner product of two corresponding binary codes. By introducing a prior on hashing codes and applying some form of relaxation, the resulting model is easy to optimize and can yield a satisfactory performance. Bayesian supervised hashing (BSH) ~\cite{hu2017bayesian} adopts the mean-field approach to infer latent factors and tune hyper-parameters automatically. However, BSH models each hashing code with a multivariate Gaussian distribution. When learning $d$-bit codes for $n$ data points, the space complexity is $\mathcal{O}(nd^2)$, which is unbearable for real applications.

In this paper, we propose a novel technique which is also based on probabilistic inference. But instead of modeling the hashing code as a multivariate Gaussian random vector, we view each binary bit as a discrete random variable that only takes values of $1$ and $-1$. Therefore, finding the most probable hashing codes is equivalent to solving marginals in the corresponding CRF. We adopt a fully
factorized mean-field inference to obtain consistency equations, and approximate these 
equations by linear systems instead of iterating them, therefore, a closed-form solution can be obtained. We use this technique to improve kernel-based supervised hashing (KSH) ~\cite{liu2012supervised} and sequential projection learning for hashing (SPLH) ~\cite{wang2010sequential}, and we obtain two new hashing methods called EM (energy minimizing based)-KSH and EM-SPLH. Contributions of this paper are summarized as follows:
\begin{itemize}
	%
	\item We first view solving different supervised hashing formulations as learning marginals in corresponding CRFs, and provide a simple yet effective linear approximation to solve them. Since many supervised hashing loss formulations can be viewed as energy functions, we can consider our method as a general framework that is able to incorporate many of them.
	\item We propose a linear-time variant of EM-KSH to tackle large-scale problems. Besides, we reduce the space complexity from the original $\mathcal{O}(nd^2)$ in BSH to $\mathcal{O}(nd)$.
	\item We have conducted image retrieval experiments on three real-world datasets. Results show that EM-KSH and EM-SPLH outperform state-of-the-art methods. Using the linear time variant of EM-KSH, we can train 64-bit hashing codes on NUS-WIDE in 20 seconds while retaining a state-of-the-art performance. 
\end{itemize}

\section{Notations and Problem Definition}
\subsection{Notations}
Lowercase and uppercase boldface letters
denote vectors and matrices, respectively. For a matrix $\mathbf{A}\in\mathbb{R}^{m\times n}$ , $A_{ij}$ represents the element at the $i$-th row and $j$-th column in $\mathbf{A}$, and $\mathbf{A}^T$ denotes the transpose of $\mathbf{A}$. $\mathbf{A}_{i\cdot}$ and $\mathbf{A}_{\cdot j}$ denote column vectors formed by the $i$-th row and the $j$-th column of $\mathbf{A}$, respectively. When $\mathbf{A}$ is a square matrix, we let $\mathbf{A}^{-1}$ be the inverse  (if exists) of $\mathbf{A}$, and diag$(\mathbf{A})$ be a column vector formed by diagonal elements in $\mathbf{A}$. $\mathbf{I}$ denotes the identity matrix of appropriate size, and $\mathbf{1}$ is a vector or matrix with all ones of appropriate size. $\lVert{\cdot}\rVert_2$ denotes the spectral norm of a matrix while $\lVert{\cdot}\rVert_F$ denotes the Frobenius norm. For a random variable $b$, $\mathbb{E}[b]$ returns its expectation.

\subsection{Problem Definition}
Suppose $\mathbf{X}=[\mathbf{X}_{1\cdot},\cdots,\mathbf{X}_{n\cdot}]^T\in\mathbb{R}^{n\times p}$ is the data matrix, where $\mathbf{X}_{i\cdot}$ is the feature of the $i$-th data point. In conventional settings ~\cite{zhang2014supervised,kang2016column}, the semantic information is given by a pairwise similarity matrix $\mathbf{S}\in\{-1,0,1\}^{n\times n}$, where $S_{ij}=1$ denotes that the $i$-th data point is similar to the $j$-th, while $S_{ij}=-1$ means they are dissimilar. When $S_{ij}=0$, we do not know whether they are similar or not. For ease of illustration, we assume that $\mathbf{S}$ is fully observed, that is, $\mathbf{S}\in\{-1,1\}^{n\times n}$, while our method can tackle the case that partial information is missing naturally. We denote $\mathbf{B}=[\mathbf{B}_{1\cdot},\mathbf{B}_{2\cdot},\cdots,\mathbf{B}_{n\cdot}]^T=[B_{ik}]\in\{-1,1\}^{n\times d}$ as the learned hashing matrix, with $\mathbf{B}_{i\cdot}$ as the $d$-bit hashing code for the $i$-th data point. The purpose of supervised hashing is to preserve semantic similarities in Hamming space, that is, the Hamming distance between hashing codes of similar data points should be small.

\section{Energy Minimizing based Hashing}
\subsection{General Energy Minimizing Approach}
Many supervised hashing loss formulations, for instance, BRE ~\cite{kulis2009learning}, SPLH ~\cite{wang2010sequential}, KSH ~\cite{liu2012supervised} and LFH ~\cite{zhang2014supervised} can  be viewed as  exponential losses of polynomials. Inspired by ~\cite{krahenbuhl2011efficient,krahenbuhl2013parameter}, we view a supervised hashing learning formulation as the corresponding densely connected CRF. If we denote a supervised hashing loss as $\mathcal{E}(\mathbf{B};\mathbf{S})$, the corresponding Gibbs distribution can be written as:

\begin{equation}\label{gibbs}
p(\mathbf{B}|\mathbf{S})=\frac{1}{Z}\exp\{-\mathcal{E}(\mathbf{B};\mathbf{S})\}.
\end{equation}

To obtain marginal probabilities of this distribution, we must calculate
sums over exponentials of energy functions, which is nevertheless intractable. Hence, some form of approximation must be utilized.
The mean-field approximation optimizes a distribution $q(\mathbf{B})$ that minimizes the KL divergence KL$(q||p)$ and factorizes with respect to a partition of variables in 
$\mathbf{B}$. Intuitively, the matrix $\mathbf{B}$ has three ways to factorize: by element, by row, and by column. Figure \ref{fig1} demonstrates a toy graphical model and its three decomposition structures.



\begin{figure}[t]
	\begin{center}
		\includegraphics[width=0.47\textwidth]{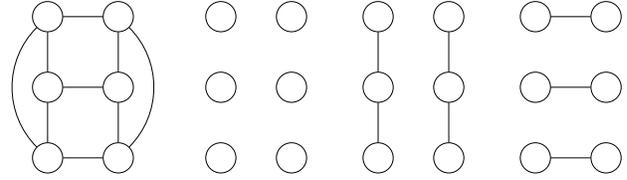}
	\end{center}
	\caption{The graphical model of a toy example and its different decomposition structures. From the left to the right are the original structure, the decomposition by element, by column and by row of the original structure, respectively.}
	\label{fig1}
\end{figure}

Factorizing $\mathbf{B}$ into independent variables provides a tractable way to infer $\mathbf{B}$. Since each element is a binary random variable, a set of closed-form updating equations can be obtained. The drawback of this approach is that all interaction terms between variables are neglected, so the performance can easily deteriorate.

Factorizing $\mathbf{B}$ by row or by column is more challenge since plenty of interaction terms have to be considered during the optimization process.
But if we can obtain the joint distribution of variables by row or by column of $\mathbf{B}$, then a two-round message passing process could yield more precise marginals than the fully factorized mean-field inference.

Our motivation is to maintain the tractability of the fully factorized distribution while considering interactions between rows or columns of variables in $\mathbf{B}$. To achieve this goal, we first derive consistency equations from the fully factorized mean-field inference, then approximate a fixed point of these equations. While solving one row or one column of variables, we view others as constants. Solving these equations is intractable since they contain the sigmoid function $\sigma(x)=1/\big(1+\exp(-x)\big)$. Therefore, we compute a linear approximation of the sigmoid function on a restricted interval and convert the original consistency equations to a set of linear systems.


To see how the sigmoid function is involved, let us consider minimizing the KL divergence between the fully factorized distribution
$q(\mathbf{B}|\mathbf{\Phi})=\prod_{i}^n\prod_{k}^d{\phi}_{ik}^{(B_{ik}+1)/2}{(1-\phi_{ik})}^{(1-B_{ik})/2}$
and $p(\mathbf{B}|\mathbf{S})$ in (\ref{gibbs}), where $\phi_{ik}$ is the probability that $B_{ik}$ takes the value $1$, and $1-\phi_{ik}$ is the probability that $B_{ik}$ takes the value $-1$.
\begin{equation}
\begin{split}
\text{KL}(q||p)&=\sum_{i}^n\sum_{k}^d\phi_{ik}\ln \phi_{ik}+(1-\phi_{ik})\ln(1-\phi_{ik})\\
&+\mathbb{E}_q[\mathcal{E}]+\ln Z.
\end{split}
\end{equation}
After letting the derivative of $\phi_{ik}$ be zero, we obtain
\begin{equation}\label{sigmoid}
\phi_{ik}=\sigma\Big(-\frac{\partial \mathbb{E}_q[\mathcal{E}]}{\partial \phi_{ik}}\Big).
\end{equation}
Since $\mathcal{E}$ is a polynomial of $\mathbf{B}$ and for a binary random variable $b$ which takes the value $1$ with probability $\phi$ and the value $-1$ with probability $1-\phi$, $\mathbb{E}[b]=1\times\phi+(-1)\times(1-\phi)=2\phi-1$, ~(\ref{sigmoid}) usually has the form of
\begin{equation}\label{hashing}
\bm{\phi}=\sigma(\mathbf{A}(2\bm{\phi}-\mathbf{1})+\mathbf{b}),
\end{equation}
%
where $\mathbf{A}$ is a real symmetric matrix of $d\times d$ and $\mathbf{b}$ is a real vector of $d\times 1$. We wish to solve $\bm{\phi}\in[0,1]^{d\times 1}$ approximately.

We restrict the range in which we approximate the sigmoid function to make the approximation error small by defining
\begin{equation}
\lambda=\max\limits_{i\in\{1,\cdots,d\}}\Big(\sum_j^d|A_{ij}|+|b_i|\Big)/c,
\end{equation}
where $c>0$ is a constant, and solve the scaled problem
\begin{equation}\label{equation}
\bm{\phi}=\sigma(\lambda^{-1}(\mathbf{A}(2\bm\phi-\mathbf{1})+\mathbf{b})).
\end{equation}
Now each term inside the sigmoid function is  bounded by an interval $[-c,c]$. We compute a linear approximation of the sigmoid function: $\sigma(x)\approx c_1x+c_2$ on this interval, where $c_1$ and $c_2$ can be determined by minimizing
\begin{equation}\label{c1}
\min_{c_1,c_2}
\int_{-c}^{c}(\sigma(x)-c_1x-c_2)^2dx.
\end{equation}
Each element in the sigmoid function of  (\ref{equation}) is a linear combination of $\bm{\phi}$, so approximating these elements directly may cause considerable error. When $\mathbf{A}$ is invertible, we apply a linear transformation $\mathbf{v}=\lambda^{-1}\mathbf{A}(2\bm{\phi}-\mathbf{1})$, that is, $2\bm{\phi}-\mathbf{1}=\lambda\mathbf{A}^{-1}\mathbf{v}$, to make sure that each term in the sigmoid function involves one variable in $\mathbf{v}$. After this approximation, (\ref{equation}) turns out to be linear equations of $\mathbf{v}$ as
\begin{equation}\label{invert}
(\lambda\mathbf{A}^{-1}-2c_1\mathbf{I})\mathbf{v}=2c_1\lambda^{-1}\mathbf{b},
\end{equation}
where since $c_2\equiv 0.5$, it is eliminated automatically. 

We mainly discuss the case that $\mathbf{A}$ is invertible, while the case that $\mathbf{A}$ is singular can be treated similarly. In both cases, we have to inverse $(\lambda\mathbf{I}-2c_1\mathbf{A})$, so we use the following theorem to ensure its invertibility.


\begin{mydef}\label{theorem}
	A sufficient condition for the invertibility of $(\lambda\mathbf{I}-2c_1\mathbf{A})$ is  that $2c_1<1/c$.
\end{mydef}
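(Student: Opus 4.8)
The plan is to reduce the invertibility question to a statement about eigenvalues and then control those eigenvalues using the very quantity that defines $\lambda$. Since $\mathbf{A}$ is real and symmetric, the spectral theorem guarantees that it has real eigenvalues $\mu_1,\dots,\mu_d$, and the eigenvalues of $\lambda\mathbf{I}-2c_1\mathbf{A}$ are then exactly $\lambda-2c_1\mu_i$. Consequently the matrix fails to be invertible precisely when $\lambda=2c_1\mu_i$ for some $i$, and it suffices to rule this equality out under the stated hypothesis.

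First I would bound the spectral radius of $\mathbf{A}$. Because $\mathbf{A}$ is symmetric, its spectral radius equals its spectral norm, $\max_i|\mu_i|=\lVert\mathbf{A}\rVert_2$, and for any induced norm this does not exceed $\lVert\mathbf{A}\rVert_\infty=\max_i\sum_j|A_{ij}|$, the maximum absolute row sum. Comparing with the definition of $\lambda$ gives immediately
\[
c\lambda=\max_i\Big(\sum_j|A_{ij}|+|b_i|\Big)\ge\max_i\sum_j|A_{ij}|\ge\max_i|\mu_i|,
\]
so every eigenvalue obeys $|\mu_i|\le c\lambda$. The $|b_i|$ terms are pure slack in this estimate, which is why the bound is comfortable.

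The final step combines this estimate with the assumption. Noting that $c_1>0$ — the least-squares fit in (\ref{c1}) of the strictly increasing sigmoid over the symmetric interval $[-c,c]$ necessarily has positive slope — the hypothesis $2c_1<1/c$ yields $|2c_1\mu_i|\le 2c_1c\lambda<\lambda$ for every $i$. In the non-degenerate case $\lambda>0$ this strict inequality places each $2c_1\mu_i$ inside the open interval $(-\lambda,\lambda)$, forbidding $2c_1\mu_i=\lambda$; hence no eigenvalue of $\lambda\mathbf{I}-2c_1\mathbf{A}$ vanishes and the matrix is invertible.

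I do not expect a serious obstacle here: once the problem is recast in terms of eigenvalues, the argument is essentially a one-line spectral-radius estimate. The only points needing genuine care are verifying that $c_1>0$, so that the two sides of $2c_1<1/c$ are correctly signed, and observing that the degenerate case $\lambda=0$ (which forces $\mathbf{A}=\mathbf{0}$ and $\mathbf{b}=\mathbf{0}$) is excluded as a trivial problem rather than a counterexample.
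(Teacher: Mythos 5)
Your proof is correct and follows essentially the same route as the paper: bound the spectral radius of $\mathbf{A}$ by its maximum absolute row sum (you via the induced $\infty$-norm of the symmetric matrix, the paper via the Gershgorin circle theorem --- the same estimate), so that $2c_1<1/c$ forces every eigenvalue of $\lambda\mathbf{I}-2c_1\mathbf{A}$ to be positive, hence the matrix is invertible. Your explicit checks that $c_1>0$ (needed for the sign reasoning, since $2c_1<1/c$ alone would not bound $|2c_1|$) and that $\lambda=0$ is a degenerate case are details the paper's proof glosses over, but they do not change the argument.
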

\begin{proof}
	We denote the eigenvalue of $\mathbf{A}$ with the largest magnitude as $\lambda_m$. According to the Gershgorin circle theorem and $\lambda=\max\limits_{i\in\{1,\cdots,d\}}\Big(\sum_j^d|A_{ij}|+|b_i|\Big)/c$, we have
	\begin{equation}
	|\lambda_m|\leq \max\limits_{i\in\{1,\cdots,d\}}\Big(\sum_j^d|A_{ij}|\Big)\leq \lambda c.
	\end{equation}
	If $2c_1<1/c$, the eigenvalue of $2c_1\mathbf{A}$ with the largest magnitude is less than $\lambda$, so $(\lambda\mathbf{I}-2c_1\mathbf{A})$ would be positive definite, which concludes the proof.
\end{proof}	

By computing, we find that when $c<2.5997$, the condition in Theorem \ref{theorem} holds automatically.


According to whether $\mathbf{b=0}$, (\ref{invert}) has two cases:
\begin{itemize}
	\item For $\mathbf{b}\neq\mathbf{0}$, this problem has a closed-form solution
	\begin{equation}\label{solution}
	\mathbf{v}=2c_1\lambda^{-1}(\lambda\mathbf{A}^{-1}-2c_1\mathbf{I})^{-1}\mathbf{b}.
	\end{equation}
	\item For $\mathbf{b}=\mathbf{0}$, this linear system does not have non-zero solution, but we can find the solution of
	\begin{equation}
	\min_{\mathbf{v}}
	\left\lVert {(\lambda\mathbf{A}^{-1}-2c_1\mathbf{I})}\mathbf{v}\right\rVert_2^2
	\end{equation}
	instead. The solution of this problem is the eigenvector associated with the smallest eigenvalue of the matrix $(\lambda\mathbf{A}^{-1}-2c_1\mathbf{I})^T(\lambda\mathbf{A}^{-1}-2c_1\mathbf{I})$.
\end{itemize}
After solving $\mathbf{v}$, recall that $\mathbf{v}+\lambda^{-1}\mathbf{b}
\triangleq\mathbf{v}^{\prime}\in[-c,c]$, we first re-normalize $\mathbf{v}^{\prime}$ by 

\begin{equation}
\mathbf{v}^\prime\coloneqq c\bigg(2\Big(\frac{\mathbf{v}^\prime-\text{min}(\mathbf{v}^\prime)}{\text{max}(\mathbf{v}^\prime)-\text{min}(\mathbf{v}^\prime)}\Big)-1\bigg),
\end{equation}
then use $\bm{\phi}=\sigma(\mathbf{v}^\prime)$ to obtain the final $\bm{\phi}$.

\subsection{Applications to Supervised Hashing}

We use two supervised hashing formulations to illustrate how to derive our EM-KSH and EM-SPLH.

For KSH ~\cite{liu2012supervised}, the corresponding Gibbs function is:
\begin{eqnarray}
p(\mathbf{B}|\mathbf{S})=\exp{\Big\{-\frac{1}{4}\sum_{i<j}^n(\mathbf{B}_{i\cdot}^T\mathbf{B}_{j\cdot}-dS_{ij})^2\Big\}}.
\end{eqnarray} 
By minimizing the KL divergence between $q(\mathbf{B}|\mathbf{\Phi})$ and $p(\mathbf{B}|\mathbf{S})$, the optimal solution is given by:
\begin{equation}\label{ksh}
\begin{split}
\phi_{ik}&=\sigma\Big(-\sum_{j\neq i}^n\sum_{k^{\prime}\neq k}^d(2\phi_{jk}-1)(2\phi_{jk^{\prime}}-1)(2\phi_{ik^{\prime}}-1)\\
&+\sum_{j\neq i}^n dS_{ij}(2\phi_{jk}-1)\Big).
\end{split}
\end{equation}
Letting 
\begin{equation}\label{phi1}
\begin{split}
&A_{kk^{\prime}}^i=\sum_{j\neq i}^n-1_{[k\neq k^\prime]}(2\phi_{jk}-1)(2\phi_{jk^{\prime}}-1),\\
&\ \ b_{k}^i\ \ =\sum_{j\neq i}^n dS_{ij}(2\phi_{jk}-1),\ \ i\in\{1,\cdots,n\},
\end{split}
\end{equation}
where $1_{[\cdot]}$ is the indicator function, we recognize that (\ref{ksh}) are actually simultaneous equations with the form of (\ref{hashing}) over triplets $\{\mathbf{A}^i,\mathbf{b}^i,\bm{\Phi}_{i\cdot}\}$ where $i\in\{1,\cdots,n\}$, so they can be solved using the technique mentioned above. 


Another example is SPLH ~\cite{wang2010sequential}. Its Gibbs distribution can be expressed as:
\begin{equation}
p(\mathbf{B}|\mathbf{S})=\exp\Big\{{\frac{1}{2}\sum_{i<j}^n S_{ij}\mathbf{B}_{i\cdot}^T\mathbf{B}_{j\cdot}}\Big\}.
\end{equation}
The corresponding re-estimation equations are given by:
\begin{equation}
\phi_{ik}=\sigma\Big(\sum_{j\neq i}^nS_{ij}(2\phi_{jk}-1)\Big).
\end{equation}

Since $\mathbf{A}^{k}=\mathbf{S}$ and $\mathbf{b}^{k}=\mathbf{0}$ hold for all $k\in\{1,\cdots,d\}$, these equations can be decoupled into $d$ identical problems over triplets $\{\mathbf{S},\mathbf{0},\mathbf{\Phi}_{\cdot k}\}$. Therefore, all hashing bits will be the same 
after solving these linear systems. In addition, $\mathbf{\Phi}_{\cdot k}$ is independent of $\mathbf{S}$, so we can solve a fixed point in one iteration exactly.


It is worth noting that after approximating consistency equations of KSH, the parameter $\mathbf{b}\neq \mathbf{0}$, while for SPLH, $\mathbf{b}\equiv\mathbf{0}$. Consequently, they correspond to two cases of our linear approximation method, respectively. We shall evaluate both of them in the experimental section. 

\subsection{Stochastic Learning}
Due to the unbearable time cost and space complexity for tackling the whole similarity matrix, we 
follow the same method as LFH~\cite{zhang2014supervised} to reduce the complexity, i.e., sampling $m$ columns in the original similarity matrix randomly. The resulting sub-matrix is denoted as $\mathbf{S}\in\{-1,1\}^{n\times m}$. 
We partition the learned $\mathbf{\Phi}$ as $\mathbf{\Phi}=[\mathbf{\Phi}_1,\mathbf{\Phi}_2]^T$ where $\mathbf{\Phi}_1\in [0,1]^{m\times m}$ is the hashing matrix for anterior $m$ points, while $\mathbf{\Phi}_2\in [0,1]^{(n-m)\times m}$ is the hashing matrix for the later $(n-m)$ data points.
We show how to learn semantic information in $\mathbf{S}$ as follows.
For $i\in\{1,\cdots,m\}$, $\mathbf{A}^i$ and $\mathbf{b}^i$ are exactly the same as in (\ref{phi1}). For $i\in\{m+1,\cdots,n\}$, we have
\begin{equation}\label{phi2}
\begin{split}
&A_{kk^{\prime}}^i=\sum_{j=1}^m-1_{[k\neq k^\prime]}(2\phi_{jk}-1)(2\phi_{jk^{\prime}}-1),\\
&\ \ b_{k}^i\ \ =\sum_{j=1}^m dS_{ij}(2\phi_{ik}-1),\ \ i\in\{m+1,\cdots,n\}.
\end{split}
\end{equation}
It is interesting to find out that for all $i\in\{m+1,\cdots,n\}$, $\mathbf{A}^{i}$ is the same matrix as $\mathbf{A}$. This property can be used to reduce the time and space complexity of EM-KSH. To see this, we first write down corresponding linear systems:
\begin{equation}\label{equation2}
(\lambda_i\mathbf{A}^{-1}-2c_1\mathbf{I})\mathbf{v}_i=2c_1\lambda_i^{-1}\mathbf{b}_i,\ i\in\{m+1,\cdots,n\}.
\end{equation}
We perform the eigen decomposition of $\mathbf{A}$ as $\mathbf{A}=\mathbf{PDP^{-1}}$ to accelerate the calculation, and solve $\mathbf{v}_i$ as
\begin{eqnarray}\label{trick}
\begin{aligned}
\mathbf{v}_i&=2c_1\mathbf{P}\big(\lambda_i\mathbf{D}^{-1}-2c_1\mathbf{I} \big)^{-1}\mathbf{P}^{-1}\frac{\mathbf{b}^i}{\lambda_i} \\
&=2c_1\mathbf{P}\text{diag}\Big ((\lambda_i\mathbf{D}^{-1}-2c_1\mathbf{I} )^{-1}\Big)\odot(\mathbf{P}^{-1}\frac{\mathbf{b}^i}{\lambda_i}),
\end{aligned}
\end{eqnarray}
where $\odot$ is the Hadamard product (element-wise product) of two matrices (vectors). Hence, we do not need to spend $\mathcal{O}(d^2(n-m))$ space to store all $(\lambda_i\mathbf{D}^{-1}-2c_1)$, instead we could complete necessary calculations at the cost of $\mathcal{O}(d(n-m)+d^2)$. Besides, $\mathbf{\Phi}_{i\cdot}$ is independent of $\mathbf{A}$ for $i\in\{m+1,\cdots,n\}$, so we can solve $\mathbf{\Phi}_2$ in one iteration. Our learning process is summarized in Algorithm \ref{Learning_Procedure_for_EM-KSH}. In general, $2\leq T\leq 10$ is enough to get satisfactory performance. We set $T=3$ throughout our experiments for time-saving purposes.
\begin{algorithm}
	\SetKwInOut{Input}{Input}
	\SetKwInOut{Output}{Output}
	\caption{Learning procedure for EM-KSH}
	\label{Learning_Procedure_for_EM-KSH}
	\Input{$\mathbf{S}\in\{-1,1\}^{n\times m},c,d, T.$}
	\Output{$\mathbf{\Phi}\in [0,1]^{n\times d}$, which will be rounded to obtain a hashing matrix $\mathbf{B}\in\{-1,1\}^{n\times d}$.}
	\BlankLine
	Initialize the matrix $\mathbf{\Phi}$ by randomization.\\
	\For{$t\leftarrow 1$ to $T$}{
		Construct $\mathbf{A}^i,\mathbf{b}^i$ according to (\ref{phi1}),\\ where $i\in\{1,\cdots,m\}$.\\ 
		Solve the resulting $m$ linear systems using (\ref{solution}),\\
		and compute $\mathbf{\Phi}_1$.\\  
	}
	Construct $\mathbf{A},\mathbf{b}^i$ according to (\ref{phi2}), \\ where $i\in\{m+1,\cdots,n\}$.\\ 
	Compute the eigendecomposition of $\mathbf{A}$.\\
	Solve the resulting $n-m$ linear systems using (\ref{trick}), \\
	and compute $\mathbf{\Phi}_2$.\\
	\Return{$\mathbf{\Phi}=[\mathbf{\Phi}_1,\mathbf{\Phi}_2]^T$.}
\end{algorithm}

\subsection{Rounding and Out-of-Sample Extension }
We follow the same procedure as in BSH~\cite{hu2017bayesian} for both rounding and out-of-sample extension. That is, we round $\mathbf{\Phi}$ according to the mean value of each bit. For out-of-sample extension, we simply learn a linear mapping $\mathbf{W}$ from $\mathbf{X}$ to $\mathbf{\Phi}$ by minimizing
\begin{equation}
\min_{\mathbf{W}}{\lVert\mathbf{\Phi}-\mathbf{XW}\rVert}_F^2+\lambda_h\lVert\mathbf{W}\rVert_F^2,
\end{equation}
where $\lambda_h$ is a regularization hyper-parameter. For a new data point $\mathbf{x}$, the corresponding $\bm{\phi}$ is calculated as
\begin{equation}
\bm{\phi}=\mathbf{W}^T\mathbf{x}.
\end{equation}
\subsection{Complexity Analysis}
We discuss the case that $\mathbf{S}\in\{-1,1\}^{n\times m}$ is a sub-matrix of the original similarity matrix. 
For $i\in\{1,\cdots,m\}$, we need $\mathcal{O}(m(n-1)d^2+m(n-1)d)=\mathcal{O}(mnd^2)$ time to calculate all $\mathbf{A}^i$ and $\mathbf{b}^i$. Then it takes $\mathcal{O}(d^3)$ to solve an equation as in (\ref{equation}), hence, calculating $\mathbf{\Phi}_1$ costs $\mathcal{O}(mnd^2+md^3)$. For $i\in\{m+1,\cdots,n\}$, $\mathcal{O}(md^2+(n-m)md)$ is required to obtain the common $\mathbf{A}$ and all $\mathbf{b}^i$. Then, by using the trick in (\ref{trick}), we are able to solve one problem like this in $\mathcal{O}(d^2)$ time, and $\mathcal{O}(d^3)$ is needed to compute the eigendecomposition of $\mathbf{A}$. 
Since $d\ll m$, updating $\mathbf{\Phi}_2$ needs $\mathcal{O}((n-m)md)$. Provided that we have to compute $\mathbf{\Phi}_1$ for $T$ times and $\mathbf{\Phi}_2$ in one iteration, the total time of computing $\mathbf{\Phi}$ is bounded by $\mathcal{O}(Tnmd^2)$. Since $m$ is usually chosen as a constant like $1000$ and the factor in $\mathcal{O}(\cdot)$ is usually quite small, this method is very fast in real applications.





For the space complexity, $\mathcal{O}({m(d^2+d)})$
is occupied by $\mathbf{A}^i$ and $\mathbf{b}^i$ for $i\in\{1,\cdots,m\}$.
By utilizing the trick in (\ref{trick}), it only takes $\mathcal{O}((n-m)d+d^2)$ space for learning $\mathbf{\Phi}_2$. Therefore, the total storage cost  is $\mathcal{O}(md^2+nd)$. In most cases, we have $md=\mathcal{O}(n)$, hence the space complexity can be written as $\mathcal{O}(nd)$, which outperforms the original $\mathcal{O}(nd^2)$ in BSH~\cite{hu2017bayesian}. 
\subsection{Extensions of Other Methods}
Extending our  method to BRE ~\cite{kulis2009learning} and ExpH ~\cite{lin2013general} is quite straightforward. Here we only show how to extend our proposed technique to LFH ~\cite{zhang2014supervised}. The optimization objective of LFH is
\begin{eqnarray}
p(\mathbf{B}|\mathbf{S})=\prod_{i<j}^n\sigma(\mathbf{B}_{i\cdot}^T\mathbf{B}_{j\cdot})^{\frac{1+S_{ij}}{2}}\big(1-\sigma(\mathbf{B}_{i\cdot}^T\mathbf{B}_{j\cdot})\big)^{\frac{1-S_{ij}}{2}},
\end{eqnarray} 
which is not a Gibbs distribution at first glance. But after applying the local variational method in ~\cite{jaakkola2000bayesian}, the original objective is lower bounded by
\begin{equation}
\tilde{p}(\mathbf{B}|\mathbf{S})=\frac{1}{Z}\exp\Big\{\sum_{i<j}^n\big(\frac{1}{2}S_{ij}\mathbf{B}_{i\cdot}^T\mathbf{B}_{j\cdot}+\lambda(\xi_{ij})(\mathbf{B}_{i\cdot}^T\mathbf{B}_{j\cdot})^2\big)\Big\},
\end{equation}
where
$\xi_{ij}=\sqrt{\mathbb{E}[\mathbf{B}_{i\cdot}^T\mathbf{B}_{j\cdot}]^2}$,
and
$\lambda(\xi_{ij})=-\frac{\sigma(\xi_{ij})-\frac{1}{2}}{2\xi_{ij}}$.
The resulting mean-field consistency equations are given by 
\begin{equation}\label{lfh}
\begin{split}
\phi_{ik}&=\sigma\Big(\sum_{j\neq i}^n\sum_{k^{\prime}\neq k}^d\big(4\lambda(\xi_{ij})(2\phi_{jk}-1)(2\phi_{jk^{\prime}}-1) \\
&(2\phi_{ik^{\prime}}-1)\big)
+\sum_{j\neq i}^n S_{ij}(2\phi_{jk}-1)\Big).
\end{split}
\end{equation}
So we can solve these consistency equations by finding a fixed point in the same manner as EM-KSH.

Interestingly, we have found a deep connection between KSH and LFH. Suppose that for arbitrary two codes $\mathbf{B}_{i\cdot}$ and $\mathbf{B}_{j\cdot}$, we have $\mathbb{E}[\mathbf{B}_{i\cdot}]$ = $\mathbb{E}[\mathbf{B}_{j\cdot}]$ or $\mathbb{E}[\mathbf{B}_{i\cdot}]$ = $-\mathbb{E}[\mathbf{B}_{j\cdot}]$, then $\xi_{ij}=d$ and $\lambda(\xi_{ij})\approx-\frac{1}{4d}$. Substituting $\lambda(\xi_{ij})\approx-\frac{1}{4d}$ into (\ref{lfh}) yields
\begin{equation}
\begin{split}
\phi_{ik}&=\sigma\Big(-\sum_{j\neq i}^n\sum_{k^{\prime}\neq k}^d\frac{1}{d}\big((2\phi_{jk}-1)(2\phi_{jk^{\prime}}-1) \\
&(2\phi_{ik^{\prime}}-1)\big)
+\sum_{j\neq i}^n S_{ij}(2\phi_{jk}-1)\Big),
\end{split}
\end{equation}
which is identical to (\ref{ksh})
except for a factor of $d$. Consequently, KSH can be viewed as a hard assignment of hashing codes to $-1$ or $1$, while LFH makes a soft assignment based on probabilities.
\section{Experiments}
\subsection{Datasets}
\begin{table*}
	\begin{center}
		\begin{tabular}{|p{1.6cm}<{\centering}||p{1.2cm}<{\centering}|p{1.2cm}<{\centering}|p{1.2cm}<{\centering}|p{1.2cm}<{\centering}||p{1.2cm}<{\centering}|p{1.2cm}<{\centering}|p{1.2cm}<{\centering}|p{1.2cm}<{\centering}|}
			\hline
			\multicolumn{1}{|c||}{Method}&\multicolumn{4}{c||}{ESPGAME}&\multicolumn{4}{c|}{CIFAR-10} \\
			\hline
			\hline
			
			& 8 bits & 16 bits & 32 bits & 64 bits & 8 bits & 16 bits & 32 bits & 64 bits\\
			\hline
			CCA-ITQ & 0.2751 & 0.2805 &	0.2804 & 0.2816 &
			0.2163 & 0.2215 & 0.0.2254 & 0.2319
			\\
			\hline
			KSH & 0.2977 &0.3086 &0.3194 &0.3248 &0.2521& 0.2825& 0.3210& 0.3492
			\\
			\hline
			LFH & 0.3116&0.3330&0.3546&0.3659&
			0.2881&0.3996&0.5216&0.6085
			\\
			\hline
			SDH & 0.3094&	0.3290&	0.3312&	0.3388
			&
			0.3329&	0.4833&	0.5397&	0.5865
			\\
			\hline
			COSDISH & 0.2977&	0.3158&	0.3327&	0.3407
			&
			\textbf{0.4898}&	\textbf{0.5733}&	\textbf{0.6215}&	\textbf{0.6369}
			\\
			\hline
			NSH & 0.2946&	0.3058&	0.3116&	0.3223
			&
			0.3907&	0.4476&	0.4875&	0.5298
			\\
			\hline
			BSH & 0.3314&	0.3456&	0.3583&	0.3639
			&
			0.4132&	0.4989&	0.5792&	0.6137
			\\
			\hline
			EM-KSH & \textbf{0.3430}&	\textbf{0.3602}&	\textbf{0.3592}&	\textbf{0.3710}
			&
			0.4459&	0.5344&	0.5804&	0.6276
			\\
			\hline
		\end{tabular}
	\end{center}
	\caption{Experimental performance on ESPGAME and CIFAR-10 in terms of mAP. Best results are in bold.}
	\label{table:1}
\end{table*}
\begin{table*}
	\begin{center}
		\begin{tabular}{|p{1.6cm}<{\centering}||p{1.2cm}<{\centering}|p{1.2cm}<{\centering}||p{1.2cm}<{\centering}|p{1.2cm}<{\centering}||p{1.2cm}<{\centering}|p{1.2cm}<{\centering}||p{1.2cm}<{\centering}|p{1.2cm}<{\centering}|}
			\hline
			\multicolumn{1}{|c||}{Code length}&\multicolumn{2}{c||}{8 bits}&\multicolumn{2}{c||}{16 bits}&\multicolumn{2}{c||}{32 bits}&\multicolumn{2}{c|}{64 bits}\\
			\hline
			\hline
			
			& mAP & Time & mAP & Time & mAP & Time & mAP & Time\\
			\hline
			CCA-ITQ & 0.2947&  4.38&	0.3002&	4.65& 0.3111& 6.02&	0.3152&  13.76
			\\
			\hline 
			KSH & 0.3732 &211.44 &0.4149 &732.87 &0.4356& 1792.90& 0.4391& 2931.97
			\\
			\hline
			LFH & 0.4542&	31.05& 0.4854&	54.50& 0.5120& 82.29&	0.5324& 	138.72
			\\
			\hline
			SDH & 0.4353&	46.14& 0.4630&	56.59& 0.4847& 155.17&	0.5143&  649.32
			\\
			\hline
			COSDISH & 0.4299&	18.71& 0.4827& 28.07& 	0.4918& 276.28&	0.5231&  1030.22
			\\
			\hline
			NSH & 0.3837&	17.72& 0.4197&	19.32& 0.4438&	24.51& 0.4385 & 31.95
			\\
			\hline
			BSH & 0.4683&	15.19& 0.4825& 20.85&	0.5076&	39.17& 0.5220&  	121.24
			\\
			\hline
			EM-KSH & \textbf{0.4878}& 10.36&\textbf{0.5120}& 10.53&\textbf{0.5331}& 12.12&\textbf{0.5434}& 17.52
			\\
			\hline
		\end{tabular}
	\end{center}
	\caption{The mAP and the corresponding training time (in seconds) on NUS-WIDE. Best results are in bold.}
	\label{table:2}
\end{table*}
We evaluate our proposed method on three image datasets: NUS-WIDE\footnote{http://lms.comp.nus.edu.sg/research/NUS-WIDE.htm}~\cite{nus-wide-civr09}, CIFAR-10\footnote{http://www.cs.toronto.edu/\url{~}kriz/cifar.html}~\cite{krizhevsky2009learning} and ESPGAME\footnote{{http://www.hunch.net/\url{~}jl/}}. All of them have been widely used for supervised hashing methods evaluation ~\cite{shen2015supervised,kang2016column,hu2017bayesian}.

CIFAR-10 consists of 60,000 color images which are manually categorized into 10 classes. Each image in this dataset is represented by a 512-dimensional GIST feature vector. Two images are considered to be similar if they are of the same class, otherwise, they are treated as dissimilar.

The ESPGAME dataset contains 20,770 images with 268 keywords while the NUS-WIDE dataset includes 269,648 natural images with 81 tags. 
During experiments, we use 512-dimensional GIST features and 500-dimensional bag-of-words features for ESPGAME and NUS-WIDE, respectively. For these two datasets, two images are considered as semantic neighbors if they share at least one common tag.	
%


\subsection{Experimental Settings}\label{exp}

Following ~\cite{zhang2014supervised,kang2016column}, for all datasets, we randomly select 1000 data points as the validation set and 1000 points as the query set. In the preprocessing phase, we perform normalization on features to make each dimension have zero mean and same variance.
The default value of $c$ is $2$, so the linear approximation of the sigmoid function is $\sigma(x)\approx 0.2109x+0.5$ on $[-2,2]$, while the default value of $m$ is $1000$. 
Since our method directly solves a fixed point of consistency mappings, very few iterations are already enough to obtain satisfactory results. For all experiments, we set $T=3$. 

Since supervised hashing methods outperform unsupervised ones in preserving semantic similarities,
we simply compare our proposed method with other state-of-the-art supervised hashing methods, including CCA-ITQ ~\cite{gong2011iterative}, KSH ~\cite{liu2012supervised}, LFH~\cite{zhang2014supervised}, SDH~\cite{shen2015supervised}, COSDISH~\cite{kang2016column}, NSH~\cite{liunatural} and BSH~\cite{hu2017bayesian}. The code of all these methods is implemented by corresponding authors.
For all methods, we follow settings the same as those suggested by these authors. 
All our experiments are conducted on a workstation with 16 Intel i7-6900K CPU cores and 64GB RAM, and all the results are the average value of 10 random partitions.

For these three datasets, we report the compared results in terms of Hamming ranking. For each query, all the data points in the training set are sorted ascending according to the Hamming distance between their hashing codes and the code of the query. The mean average precision (mAP) is used to evaluate different methods.
\subsection{Comparing EM-KSH with baselines}

Table \ref{table:1} and \ref{table:2} show the mAP of our proposed EM-KSH and other methods on these three datasets. In addition, the training time in seconds on the NUS-WIDE dataset with various code lengths is reported in Table \ref{table:2}. By comparing our EM-KSH with other baselines, we discover that for ESPGAME and NUS-WIDE, our method outperforms other baselines in almost all cases. Especially, EM-KSH achieves much better performance than the original KSH for all these three datasets consistently, which justifies that the fixed point solved by EM-KSH is quite desirable. For CIFAR-10, as we can see, COSDISH is the most effective hashing method.
Recall that for a single-label dataset, the equality relation over the set of labels is transitive, that is, for three labels $l_i$, $l_j$, $l_k$, if $l_i=l_j$ and $l_j=l_k$, then $l_i=l_k$. We speculate that the error bound of the $2d$-approximation algorithm being used in COSDISH would tighten when the transitive relation exists in the similarity matrix.


For the time complexity, the performance of our method is even more superior. As we can see, only CCA-ITQ can be slightly faster than  EM-KSH, but our method outperforms CCA-ITQ by a large margin. Besides, EM-KSH is orders of magnitude faster than other state-of-the-art methods, especially for learning 64-bit hashing codes on NUS-WIDE. In fact, the time spent for EM-KSH to learn 64-bit codes on 
NUS-WIDE is less than or equal to the time that other baselines used to learn 8-bit codes. 


In summary, our EM-KSH yields the best performance on two datasets with multiple tags and is almost as fast as CCA-ITQ. Although COSDISH outperforms EM-KSH on the CIFAR-10 dataset, we still argue that our method is state-of-the-art. First, multi-label images are easier to collect and have more real-world applications. Second, EM-KSH is much faster than COSDISH, so we can train longer hashing codes to defeat COSDISH on single-label datasets 
with a reasonable time cost. 
In addition, our method can be easily modified to accommodate different hashing formulations, while COSDISH is less flexible.
\subsection{Sensitivity to Hyper-parameters}
We vary the hyper-parameter $c$ from $0.5$ to $2.5$, and report the mAP performance on CIFAR-10 and NUS-WIDE in Figure \ref{fig2}. We find that EM-KSH is not sensitive to the value of $c$ and can achieve good performance consistently.

In Figure \ref{fig3}, we show a performance comparison of EM-KSH and other baselines with the size of the query set $q$ and the number of columns in the similarity sub-matrix $m$ take values from $1000,2000,3000,5000$ and $10000$ on NUS-WIDE. In almost all cases, the performance of EM-KSH is superior to other baselines.

\begin{figure}[t]
	\begin{center}
		\includegraphics[width=.23\textwidth]{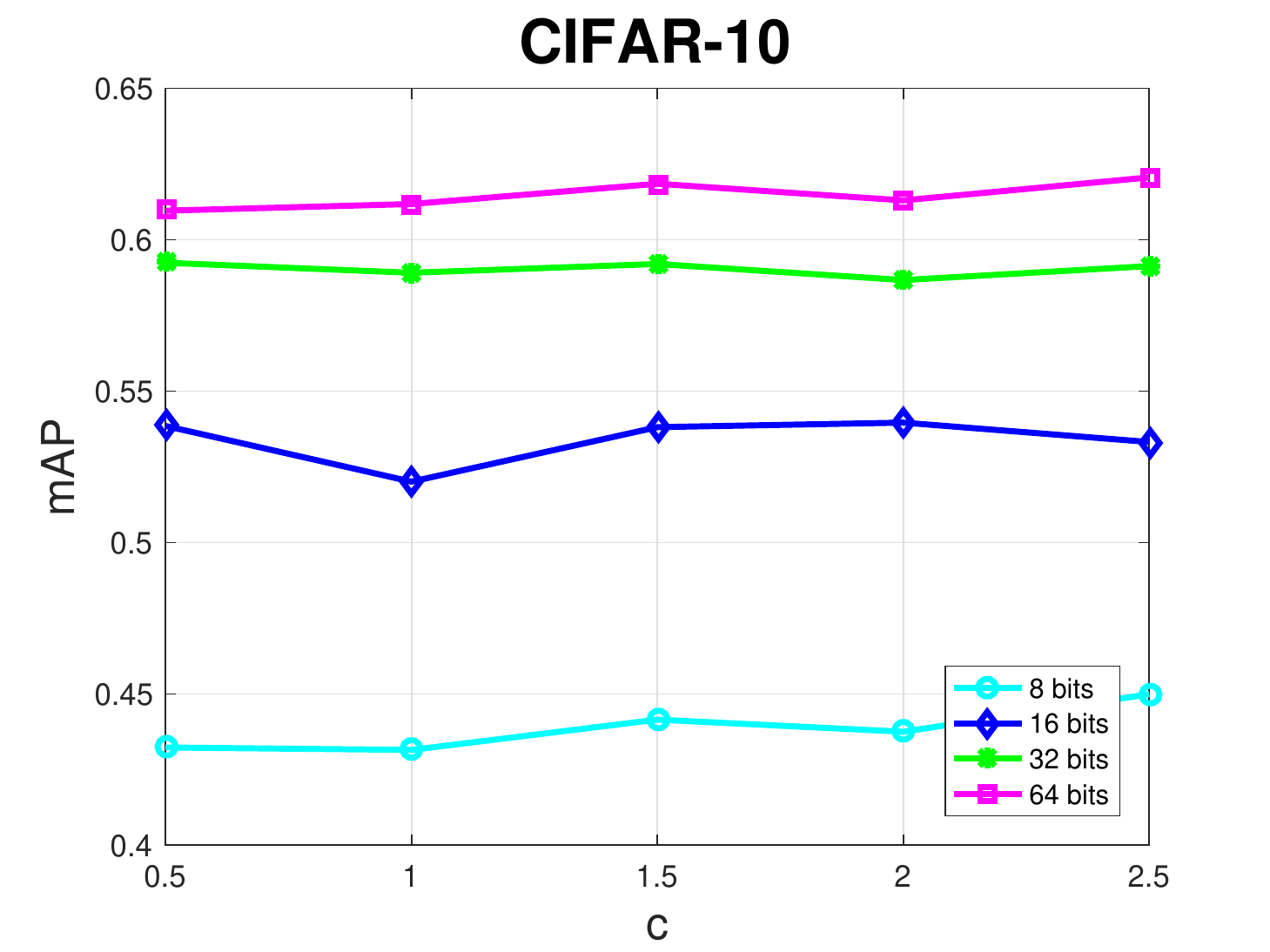}
		\includegraphics[width=.23\textwidth]{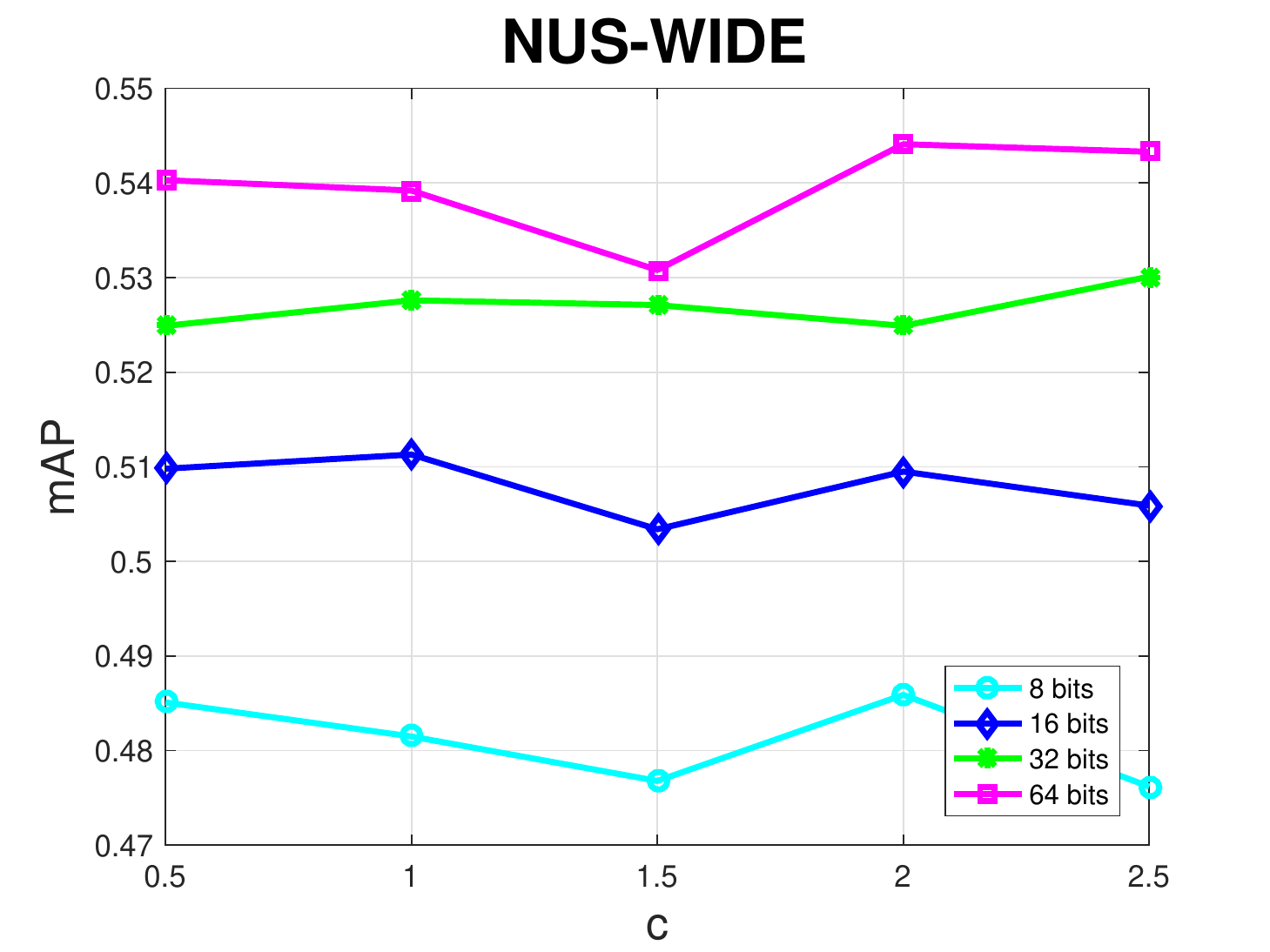}
	\end{center}
	\caption{Sensitivity to hyper-parameter $c$.}
	\label{fig2}
\end{figure}
\begin{figure}[t]
	\begin{center}
		\includegraphics[width=.23\textwidth]{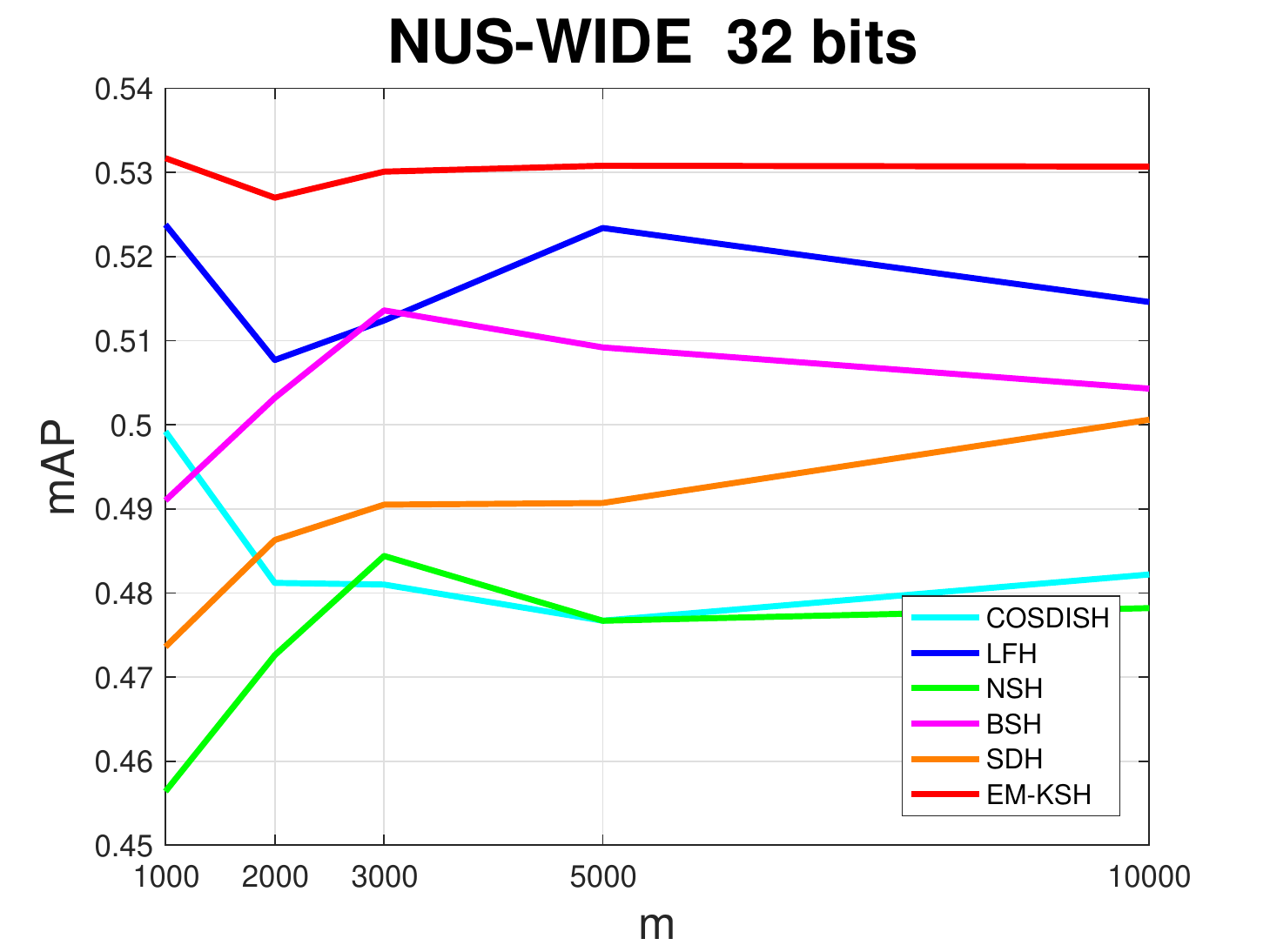}
		\includegraphics[width=.23\textwidth]{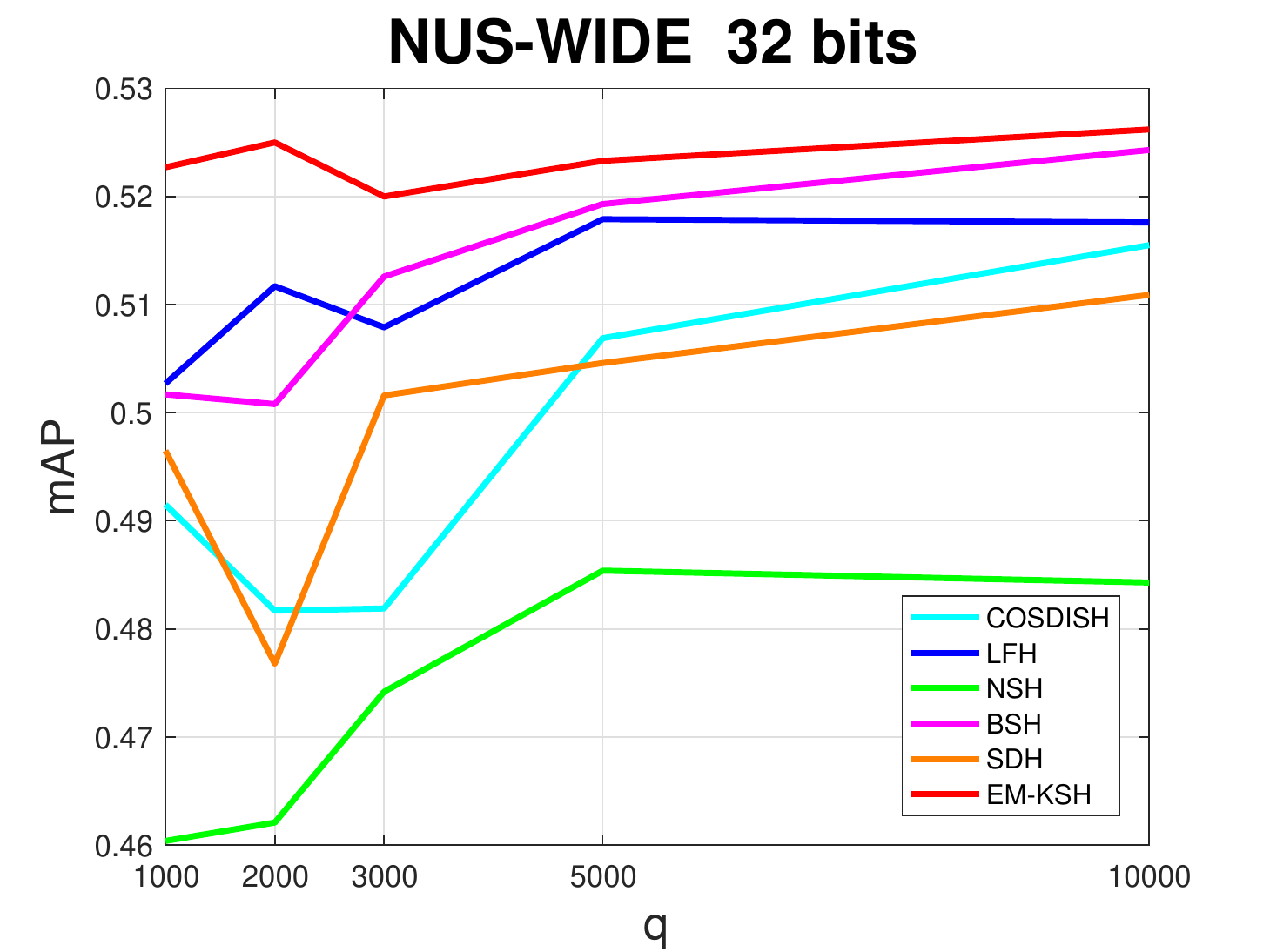}
	\end{center}
	\caption{Sensitivity to hyper-parameters $m$ and $q$.}
	\label{fig3}
\end{figure}
\subsection{Long Hashing Codes Learning}
In many applications, space is not the primary bottleneck, so longer hashing codes might be required to further boost the retrieval precision. We train 128-bit and 256-bit codes using EM-KSH and other baselines on NUS-WIDE and show the result in Table ~\ref{table:4} to further demonstrate the effectiveness of our method.

\begin{table}[!h]
	\begin{center}
		\begin{tabular}{|p{1.6cm}<{\centering}||p{1.0cm}<{\centering}|p{1.1cm}<{\centering}||p{1.0cm}<{\centering}|p{1.2cm}<{\centering}|}
			\hline
			\multicolumn{1}{|c||}{Code length}&\multicolumn{2}{c||}{128 bits}&\multicolumn{2}{c|}{256 bits}\\
			\hline
			& mAP & Time & mAP & Time \\
			\hline
			\hline
			COSDISH & 0.5318&		691.73& 0.5405&	3096.79
			\\
			\hline
			LFH & 0.5378&	71.24&	0.5548& 135.23
			\\
			\hline
			NSH & 0.4415&	22.82& 	0.4606&	43.94
			\\
			\hline
			BSH & 0.5285 &	508.64 & OOM  	&OOM
			\\
			\hline
			EM-KSH & \textbf{0.5509}&	27.12 & 	\textbf{0.5615} &	55.68
			\\
			\hline
		\end{tabular}
	\end{center}
	\caption{The mAP and corresponding training time in seconds on NUS-WIDE for learning 128-bit and 256-bit codes. Best results are in bold. OOM means out-of-memory error.}
	\label{table:4}
\end{table}
During experiments, BSH runs out of memory and terminates while learning 256-bit codes, while our method can easily learn codes up to 256 bits with a rather low time cost. The results further indicates our improvement on the space side is significant.

\subsection{Comparing EM-SPLH with baselines}
EM-KSH corresponds to the case $\mathbf{b}\neq\mathbf{0}$ for our proposed linear approximation method, so we also report the performance of EM-SPLH corresponding to the case $\mathbf{b}=\mathbf{0}$ for completeness. Since there is no interaction term in the formulation of SPLH, all bits learned by finding a fixed point should be the same. Consequently, we evaluate our EM-SPLH and other state-of-the-art baselines by just learning 1-bit code.
For all three datasets, we randomly choose 1000 points as the training set and another 1000 as the test set. The results are the mean of 10 independent partitions.  

\begin{table}[!h]
	\begin{center}
		\begin{tabular}{|c||c|c|c|}
			\hline
			Method & ESPGAME & CIFAR-10 & NUS-WIDE \\
			\hline
			\hline
			LFH	&0.4890&	0.1619&	0.3740
			\\
			\hline
			COSDISH	&0.4773	&0.1054	&0.4065
			\\
			\hline
			NSH	& 0.4957	& \textbf{0.1714}& 0.4646
			\\
			\hline
			BSH	&0.5009	&0.1409	&0.3893
			\\
			\hline
			EM-KSH & {0.4953}&	0.1558&	{0.4703}
			\\
			\hline
			EM-SPLH & \textbf{0.5110}&	0.1573&	\textbf{0.5015}
			\\
			\hline
		\end{tabular}
	\end{center}
	\caption{The mAP performance of learning 1-bit hashing code on three datasets. Best results are in bold.}
	\label{table:3}
\end{table}			
As shown in Table \ref{table:3}, EM-SPLH outperforms several state-of-the-art methods on ESPGAME and NUS-WIDE datasets, including EM-KSH. For CIFAR-10, the performance of our method is also competitive. In fact, for the 1-bit case, formulations of KSH and SPLH are equivalent since $(dS_{ij}-b_ib_j)^2=-2dS_{ij}b_ib_j+const$ holds for $d=1$. 
Besides, LFH, COSDISH, BSH and EM-KSH can all be viewed as optimizing the formulation of KSH (COSDISH and EM-KSH optimize the original KSH, while LFH and BSH optimize the KSH with soft assignments). By comparing the results, we can find the fixed point solved by our method is quite desirable.

\section{Conclusion}
In this paper, we have proposed a novel method to approximate a fixed point of consistency mappings deriving from mean-field inference. We convert these
consistency equations to linear systems by a linear approximation of the sigmoid function to obtain a closed-form solution. By using this technique to supervised hashing problem, we obtain EM-KSH and EM-SPLH. Experimental results on three image datasets show that our methods outperform other state-of-the-art methods.


{\small
	\bibliographystyle{ieee}
	\bibliography{reference}
}

\end{document}